%% file: IJCAI.tex
\documentclass{article}
\usepackage{ijcai26}

\usepackage{times}
\usepackage{soul}
\usepackage{url}
\usepackage[hidelinks]{hyperref}
\usepackage[utf8]{inputenc}
\usepackage[small]{caption}
\usepackage{graphicx}
\usepackage{amsmath}
\usepackage{amsthm}
\usepackage{booktabs}
\usepackage{algorithm}
\usepackage[switch]{lineno}
\usepackage{algpseudocode}
\usepackage{amssymb}
\usepackage{booktabs}
\usepackage{colortbl}
\usepackage{xcolor}

\usepackage{subcaption}

\theoremstyle{definition}
\newtheorem{definition}{Definition}

\theoremstyle{plain}
\newtheorem{theorem}{Theorem}[section]
\newtheorem{lemma}{Lemma}[section]

\definecolor{best}{gray}{0.92}

\title{COREKG: Coreset-Guided Personalized Summarization of Knowledge Graphs}

\author{
Sohel Aman Khan \and
Raghava Mutharaju\thanks{Raghava Mutharaju is currently affiliated with the Mehta Family School of Data Science and AI, IIT Palakkad, India, and can be contacted at raghava@iitpkd.ac.in.} \and
Supratim Shit \\
\affiliations
Department of CSE, IIIT-Delhi, India \\
\emails
\{sohelk, raghava.mutharaju, supratim\}@iiitd.ac.in
}

\begin{document}

\maketitle


\begin{abstract}
Knowledge Graphs (KGs) are extensively used across different domains and in several applications. Often, these KGs are very large in size. Such KGs become unwieldy for tasks such as question answering and visualization. Summarization of KGs offers a viable alternative in such cases. Furthermore, personalized KG summarization is crucial in the current data-driven world as it captures the specific requirements of users based on their query patterns. Since it only maintains relevant information, the personalized summaries of KG are small, resulting in significantly smaller storage requirements and query runtime. In this work, we adapt the coreset theory to create personalized KG summaries. For a given dataset and a user-specific query workload, we present an approach that samples a relevant subset of triples using sensitivity-based importance sampling. We ensure that the subset approximates the characteristics of the full dataset with bounded approximation error. We define sensitivity scores that measure the importance of a triple with respect to a user’s query workload, which are then used by our coreset construction algorithm. We explicitly focus on personalized knowledge graph summarization by constructing summaries independently for each user based on their query behaviour. 
Our evaluation on Freebase, WikiData, and DBpedia shows that COREKG delivers higher query-answering accuracy and structural coverage than the state-of-the-art methods, such as GLIMPSE, PPR, iSummary, PEGASUS and APEX$^2$ while requiring only a tiny fraction of the original graph. 
\end{abstract}
  
\input{sections/introduction}

\input{sections/Related_Works}

\input{sections/Preliminaries}

\input{sections/Methodology}

\input{sections/Experimants}

\newpage
\appendix
\input{sections/Appendix}

\bibliographystyle{named}
\bibliography{ref}

\end{document}

%% file: sections/introduction.tex
\section{Introduction}
In the digital age, numerous sources of information are readily available. Knowledge Graphs (KGs)~\cite{peng2023knowledge} are an effective way to represent and reason over interconnected data. Large-scale KGs such as Wikidata, DBpedia, and Freebase~\cite{bollacker2008freebase,mendes2012dbpedia} have become central to numerous real-world applications, including search engines, recommendation systems, question-answering (QA), and biomedical informatics. However, the vast size and high connectivity of these KGs introduce challenges for efficient querying, visualization, and reasoning. QA systems often struggle with slow query execution, and visualization tools become cluttered and less interpretable at scale~\cite{wang2024survey}. Knowledge graph summarization addresses these issues by helping users understand and utilize the data while preserving key structural and semantic information. However, most summarization methods are focused on capturing the general pattern of the original KG~\cite{jalota2021lauren,wang2024survey}. As a result, the generated summaries tend to be overly generalized and often fail to accommodate the specific information needs of individual users. 

Query-based knowledge graph summarization addresses this issue by constructing compact summaries optimized for a given query workload rather than the global graph structure. It improves query efficiency and answer completeness under reduced graph sizes \cite{song2018mining,niazmand2022efficient}. However, such workloads are typically defined at an aggregate level. These summaries do not capture individual user preferences, motivating the need for personalized knowledge graph summarization \cite{jalota2021lauren}. 

Considering the above limitation, personalized summarization of KGs has massive potential because it focuses on user interest. It is also efficient in terms of storage and time. Early research like GLIMPSE~\cite{safavi2019personalized} uses submodular maximization to retrieve user-specific KG summaries using query logs, which enable device-level KG summaries with theoretical guarantees. However, GLIMPSE assumes full access to each user's query history, which has less explicit control over summary sparsity. ISummary~\cite{vassiliou2023isummary}  creates efficient, personalized Knowledge Graph summaries by analyzing SPARQL query logs to capture collective user interests, reducing computation while improving scalability and relevance. However, the summarization quality in iSummary depends heavily on workload coverage. 
Furthermore, the sample size of COREKG ensures controlled budget–risk tradeoffs.
We present a coreset-based framework named COREKG for personalized Knowledge Graph summarization that addresses the limitations of existing personalized approaches~\cite{feldman2017coresets,safavi2019personalized}. By adapting coreset theory to the KG domain, our method provides theoretical guarantees while maintaining computational efficiency~\cite{braverman2021efficient}. Our approach generates compact, user-centric KG summaries by sampling triples based on sensitivity scores that capture their importance across user queries~\cite{safavi2019personalized}. In constructing personalized KG summaries, we focus on queries that contain seed nodes (entities of direct user interest)~\cite{safavi2019personalized} and we provide theoretical bounds that preserve the original query semantics. A high-level overview of the COREKG pipeline, including preprocessing, sensitivity computation, and sampling, is illustrated in the Figure~\ref{fig:diagram}.

Coresets are a form of weighted summarization of a dataset with provable guarantees ~\cite{feldman2011unified}. In the past couple of decades, coresets have been studied for various problems and applications, including clustering ~\cite{feldman2013turning,bachem2018scalable,chhaya2020online}, regression ~\cite{he2021secure,chhaya2020coresets,boutsidis2013near}, multilinear algebra ~\cite{dasgupta2009sampling,chhaya2020streaming}, non-decomposable functions \cite{malaviya2024simple}, federated learning \cite{huang2022coresets,shit2025improved} etc. The most popular method for constructing a coreset is importance sampling via a sensitivity framework, which assigns an importance score to data points (triples) commensurate with their contribution to a particular query workload, thereby supporting efficient importance-based sampling. Some works~\cite{feldman2017coresets,braverman2021efficient} prove these ideas yield strong approximation bounds. COREKG extends these ideas to graph triples, and its coreset summary approximates the utility of a complete graph for personalized queries.

To fulfill our objective, we construct a coreset—a smaller, weighted subgraph—from the original knowledge graph, tailored specifically to a user's query workload. Instead of evaluating every possible query association across the complete graph, we sample a subset of relevant triples and assign them statistical weights to correct for sampling probability. This ensures our coreset provides a provable approximation guarantee; it preserves the query-answering semantics of the full graph up to a small, bounded relative error while drastically reducing the data footprint. Our empirical evaluation further demonstrates superior performance in query coverage over real-world KG datasets~\cite{vassiliou2023isummary,safavi2019personalized}, and the method is effective even with limited computational resources~\cite{safavi2019personalized}.

To explicitly capture per-user preferences, COREKG constructs user-centric workloads based on seed entities that represent an individual user’s interests. These seed entities are extracted from the query workload, and the summary is generated only from queries that mention at least one of these seeds. By repeating this procedure for different seed sets, COREKG naturally adapts to multiple users, enabling personalized knowledge graph summarization rather than relying on a single global workload. Our main contributions are as follows.
\begin{itemize}
    \item We propose COREKG, a coreset-guided framework for personalized knowledge graph summarization that adapts coreset theory to graph data and provides provable approximation guarantees (Theorem \ref{theorem:main}) with practical scalability. 
    \item We introduce a sensitivity-based importance sampling formulation that quantifies the relevance of triples with respect to user-specific query workloads, ensuring robustness across diverse personalization.
    \item Our coreset construction provides a worst-case approximation guarantee on the user-specific workload cost. It is independent of the size of the full graph. All query evaluation on the summary is performed on the coreset.
    \item We conduct extensive experiments on Freebase, DBpedia, and Wikidata, showing that COREKG consistently achieves higher coverage and F1 Score than baseline methods such as APEX$^2$, GLIMPSE, PEGASUS, iSummary, and PPR.
\end{itemize}

%% file: sections/Related_Works.tex
\section{Related Work}
In this section, we review prior studies and approaches that are closely related to our work. We highlight existing methods for knowledge graph summarization, query-driven graph reduction and discuss how COREKG builds upon and differs from these efforts.



Summarization refers to generalizing KGs by removing unnecessary information by merging similar nodes and edges into supernodes and superspecs to generalize the information in the graph. This process removes unnecessary information while preserving the generalization of the information in the graph ~\cite{jalota2021lauren,kumar2018utility}. Most existing summarization approaches primarily emphasize capturing the overall structure of the original KG~\cite{jalota2021lauren,wang2024survey}.

To fix these limitations, Query-based summarization of knowledge graphs aims to construct compact summaries that maximize response completeness and relevance for a workload of specified target queries. Early research demonstrated that lossy summaries can improve query processing time performance, and enable efficient KG search capacity through summaries, especially relevant to query task needs ~\cite{song2018mining}. Approaches, such as Grouping-Based and Query-Based Summarization, utilize formal methods that enable the significant compression of graph size while retaining complete query performance ~\cite{niazmand2022efficient}. Lastly, a few relatively recent works are looking at attention, or sampling methods to approach scale issues and maintain query workload performance improvements ~\cite{shabani2024graphsum}. Other lines of work explore the structure of the KG, with works on incremental summarization, such as RDFQuotient, enabling the dynamic generation of summaries without the need to resummarize static RDF datasets ~\cite{goasdoue2019incremental}. Summarization oriented towards queries has similarly been explicitly positioned for downstream tasks like question answering (QA). For example, in QA, summarization can be used as either a filtering method for relevance or to rank KG summaries while achieving comparable QA efficacy ~\cite{jalota2021lauren}. More generally, summarization has been positioned as an enabler for efficient graph mining and querying ~\cite{koutra2021power}. Complementary frameworks also speak to how graph-based summarization techniques could be helpful in support of an abstractive summarization or knowledge extraction pipeline beyond query efficiency ~\cite{moro2023graph}. All of these studies contribute evidence that query-aware graph summarization is a unique area in KG summarization in which both compactness and completeness can be evaluated. But, this summarization can't fulfill user-specific needs. To fix this problem, KGs are summarized by considering user-specific needs. This summarization is known as personalized KG summarization.

GLIMPSE ~\cite{safavi2019personalized} develops a submodular optimization theory for the personalized KG summarization for data from query logs. iSummary ~\cite{vassiliou2023isummary} proposed summarization through workloads that generalize across aggregate user behaviours. Yet, limitation exist for both studies. GLIMPSE assumes each individual has a dense set of histories for generating results, and iSummary relies on the conclusive coverage of SPARQL workloads. The PEGASUS ~\cite{kang2022personalized} method generates summary graphs for individuals in linear time and uses a personalized cost function to generate summaries through greedily combining supernodes to act as underlying groups that allows for expected query answering under a high compression, however it is still a heuristic, which lacks slight approximation, and it assumes the target set of nodes are static, reducing its applicability in rapidly changing and dynamic environments, where direction and interests are constantly evolving. The APEX$^2$ ~\cite{li2025apex2} method modulates personalized assistance in Knowledge Graphs incrementally while modeling a more diverse set of evolving query workloads. It generates summaries in real time based on modeled heat diffusion, while adhering to tight constraints that ensure provable guarantees. This is achieved under the circumstance where the adaptive current workload generated by the querying node provides for the activity of interactions. However, APEX$^2$ ~\cite{li2025apex2} includes complete reliance on representative query workloads, and can fail on queries that are sparse or extremely skewed. APEX$^2$ also assumes that modeling heat diffusion behaves as expected between the workload nodes based on topological proximity, and does not expect semantic relationships as captured in modeling KGs. COREKG circumvents both challenges and gaps by utilizing coreset sampling to provide theoretical guarantees while accommodating sparse or skewed query logs.

Recommendation systems usually personalize suggestions based on user preferences that can be inferred from query histories. For example, model behaviour studies, such as ~\cite{vcebiric2015query,cheng2011relin,gunaratna2015faces}, are designed to characterize user interests using SPARQL queries to extract relevant entities and paths to create personalized summaries. Additionally, some diversity-aware methods, including FACES ~\cite{gunaratna2015faces}, are designed to balance informative against representation diversity in entity-centric summaries.

Personalized PageRank (PPR)~\cite{haveliwala2003analytical} presents a method for designing the ranking method for user-centered exploration utilizing the graph. However, these methods often do not provide any explicit controls for summary size or utility. The COREKG includes an advancement over these approaches by introducing a generalized sampling framework that incorporates user-specific queries to lay the groundwork for explicit personalizability capabilities.

%% file: sections/Preliminaries.tex
\section{Preliminaries}
The preliminaries here provide the foundation for understanding our methodology and analysis.

\noindent
\textbf{Knowledge Graph (KG):} A Knowledge Graph (KG)~\cite{peng2023knowledge} is a structured representation of data, where entities are modeled as nodes and their relationships as labeled edges. Formally, a KG is defined as \( G = (V, E, T) \), where \( V \) is the set of nodes (entities), \( E \) is the set of edge labels (relations), and \( T \subseteq V \times E \times V \) is the set of RDF triples representing relational facts. SPARQL is commonly used to access and manipulate these triples, enabling complex pattern matching and information retrieval. The query workload on a KG consists of sets of SPARQL queries that reflect typical information needs, varying in complexity, frequency, and the types of operations they perform.

\noindent
\textbf{Seed node:} A seed node is an entity that represents a user’s interest and is used to guide personalization. In this paper, Seed nodes are used to filter the global query log to construct a user-specific query workload.

\noindent
\textbf{Personalized KG summarization:} A Personalized Knowledge Graph summary~\cite{safavi2019personalized,vassiliou2023isummary,kang2022personalized,li2025apex2} refers to a summary extracted from a knowledge graph that is tailored to a user’s interests and preferences.

In this setting, seed nodes represent a user’s interests by identifying the entities around which personalization is performed. In our work, seed entities define user-specific query workloads that guide the selection of relevant triples. As a result, the generated KG summary focuses on the parts of the graph that are most important to the user’s information needs~\cite{vassiliou2023isummary}. Example: If a user frequently queries about \textit{workplaces} and \textit{industries}, the chosen seeds may include entities such as \textit{Bob} and \textit{XCorp}. A personalized summary will then prioritize triples related to these seeds, such as: (Bob, worksAt, XCorp) and (XCorp, industry, Tech)

\noindent
\textbf{Coreset:} A Coreset~\cite{bachem2017practical} is a small, weighted subset of data that approximates a cost function with provable error bounds. We adapt this concept to Knowledge Graphs. Given a KG $G = (V,E,T)$ with triples $T$, and a query workload $Q$, we define the cost of answering $Q$ on the full graph as $\text{cost}(G, Q) = \sum_{t \in T} \sum_{q \in Q} f_q(t)$, where $f_q(t) \in {0,1}$ indicates whether triple $t$ is relevant to query $q$.

A coreset $C = (V', E', T')$, where $V' \subseteq V$, $E' \subseteq E$, and $T' \subseteq T$, is a weighted subset of triples. The coreset is constructed by sampling triples according to their importance and then assigning them appropriate weights. Each triple $t \in C$ receives a weight $w(t)$ that compensates for its sampling probability, ensuring unbiased estimation of the full cost. The weighted cost over the coreset is defined as $\text{cost}(C, Q) = \sum_{t \in C} \sum_{q \in Q} w(t) \cdot f_q(t).$ Formally, if each triple $t$ is sampled with probability $p(t)$ and the total number of samples is $m$, the weight is $w(t) = \frac{1}{m \cdot p(t)}.$ This weight $w(t)$ ensures that the expected contribution of each triple in $C$ matches its true contribution in $G$, thereby removing sampling bias. Intuitively, triples that are sampled with lower probability receive higher weights, since they represent a larger portion of the original graph, whereas frequently sampled triples receive smaller weights. The coreset satisfies $\big|\text{cost}(G,Q) - \text{cost}(C,Q)\big| \leq \varepsilon \cdot \text{cost}(G,Q)$ with this weighting and high probability. This definition is generic and applies to any query workload $Q$. In the COREKG framework, it is applied to personalized query workloads derived from individual users, resulting in a user-specific coreset.

%% file: sections/Methodology.tex
\section{Methodology}
We present COREKG (Figure~\ref{fig:diagram}), a coreset-based importance sampling approach for personalized knowledge graph summarization. The method produces compact, query-relevant subgraphs that approximate the behaviour of the full graph with respect to a user’s query workload while guaranteeing bounded approximation error. Here, triples are scored based on how frequently or critically they contribute to a user-specific query workload, where more relevant triples are assigned higher sampling probabilities to preserve informative structure.  COREKG employs sensitivity-based sampling to prioritize triples by their explicit contribution to query answering, ensuring preservation of user-relevant structural and semantic information. It operates as a per-user summarization framework, where user modeling, workload construction, sensitivity estimation, and coreset sampling are performed independently for each user and conditioned on their personalized query workload.

\begin{figure}[t]
    \centering
    \includegraphics[width=\columnwidth]{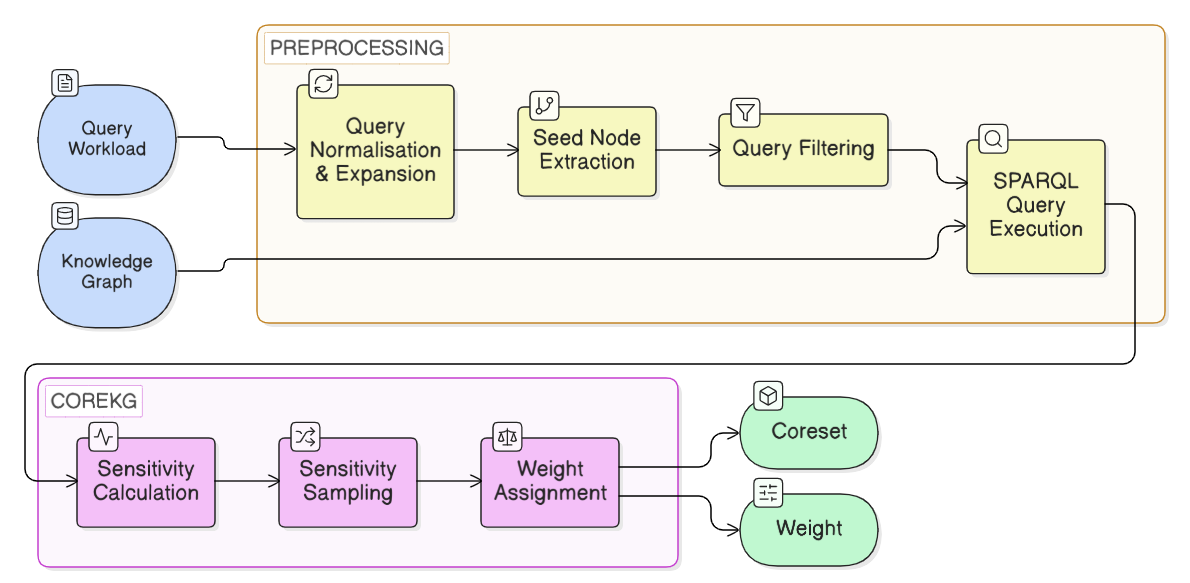}
    \caption{\scriptsize{COREKG Framework}}
    \label{fig:diagram}
\end{figure}

\subsection{Preprocessing Pipeline}


In COREKG, preprocessing provides user modeling by representing each user through a small set of seed entities extracted from query history. These seeds define a personalized query workload that filters relevant queries from the global log and enables user-specific knowledge graph summarization. The preprocessing pipeline comprises four stages: query normalization and expansion, seed extraction, query filtering, and SPARQL execution.

\noindent
\textbf{Query Normalization and Expansion:} In order to maintain semantic consistency within the query workload, we need to first normalize all SPARQL queries. Normalization consists of two stages: First, we take the original SPARQL query and eliminate duplicate PREFIX declarations to remove unnecessary repeated boilerplate in the process. Second, we expand all prefixed terms to complete URIs using a fixed prefix map via regular expression matching and replacement. This ensures that all relations and entities are consistently and unambiguously represented across the query set. We keep fully expanded queries for future processing.

\noindent
\textbf{Seed Node Extraction:} After we have normalized the query workload, we extract the unique entities from the triple patterns that we can use as the seed nodes that can facilitate personalized summaries. For every unique query, we parse the triple patterns and locate URIs and useful tokens while omitting variables and literals. We use regular expressions in order to extract the URIs while filtering the SPARQL syntax.

\noindent
\textbf{Query Filtering:} We apply final filtering to the expanded query set to keep only the queries that mention one of the sampled seed nodes. This filtering step enables the summary construction to begin from the most appropriate and well-requested sections of the knowledge graph based on actual user queries. The filtered queries can be used to execute SPARQL queries against the KG and guide the development of the personalized summaries.

\noindent
\textbf{SPARQL Query Execution:} Preprocessed SPARQL queries are executed using Apache Jena Fuseki\footnote{\url{https://jena.apache.org/documentation/fuseki2/}}, an open-source SPARQL server, with knowledge graphs stored in the TDB2 backend for disk-based indexing and query optimization. Filtered queries are evaluated through Fuseki’s SPARQL endpoint using the ARQ engine, which performs internal indexing and query rewriting for efficient retrieval. Result sets are streamed and parsed to extract relevant triples for sensitivity computation in COREKG. This deployment supports concurrent execution, persistent endpoints, and low-latency access for large graphs such as Freebase, DBpedia, and Wikidata. This provides a scalable and reproducible integration between query preprocessing and coreset.

Each distinct seed set uniquely identifies a user, and all subsequent stages of the framework operate exclusively on the corresponding user-specific workload.

\begin{algorithm}[ht]
\caption{COREKG}
\label{alg:coreset_m}
\begin{algorithmic}[1]
\Require KG $G = (V, E, T)$; Query set $Q$; \#samples $m$
\Ensure Weighted summary $C \subseteq T$ with weights $w(t)$
\ForAll{$q \in Q$}
    \State $T_q \gets \{ t \mid \forall t \mbox{ relevant to query } q\}$
\EndFor
\ForAll{$t \in T$}
    \State $s(t) \gets \sum_{q \in Q} \frac{1}{|T_q|} \cdot \mathbb{I}[t \in T_q]$ \Comment{Triple sensitivity}
\EndFor
\State $S \gets \sum_{t \in T} s(t) = |Q|$
\ForAll{$t \in T$}
    \State $p(t) \gets \frac{s(t)}{S}$ \Comment{Sampling probability}
\EndFor
\State Sample $m$ triples iid from $T$ using distribution $p$
\State Let $C$ be the sampled triples
\ForAll{$t \in C$}
    \State $w(t) \gets \frac{S}{m \cdot s(t)} = \frac{|Q|}{m \cdot s(t)}$   
\EndFor
\State \Return subset $C$ and weight function $w$
\end{algorithmic}
\end{algorithm}

\subsection{Sensitivity Score}
As shown in Algorithm~\ref{alg:coreset_m} (Lines~4--6), COREKG computes a user-specific sensitivity score for each triple based on its contribution to the queries issued by an individual user. Let $u$ denote a user characterized by a seed entity set extracted from a query set.  All sensitivity computations are then performed with respect to filtered user-specific (i.e., personalized) workload $Q$. Formally, the sensitivity of a triple $t$ for a given user $u$ is defined as:
\[
s(t) = \sum_{q \in Q} \frac{1}{|T_q|} \cdot \mathbb{I}[t \in T_q],
\]
where $T_q$ denotes the set of triples relevant to query $q \in Q$, and $\mathbb{I}[t \in T_q]$ is an indicator function that returns 1 if $t$ contributes to the answer of $q$, and 0 otherwise. $T_q$ denotes the set of triples matched by the triple patterns of query $q$ under standard SPARQL evaluation. This definition ensures that the sensitivity of a triple is explicitly conditioned on the interests of user $u$, as encoded by their personalized query workload.
Intuitively, triples that appear frequently in a given user's queries receive higher sensitivity scores. Similarly, triples that occur in queries with small answer supports are emphasized and are therefore more likely to be included in the user's summary. As a result, different users induce different sensitivity distributions over the same underlying graph, leading to distinct personalized summaries. The total sensitivity for user $u$ is defined as $S = \sum_{t \in T} s(t)$.

The total sensitivity $S$ with respect to a fixed user workload $Q$, is bounded on $|Q|$.
This result simplifies normalization and guarantees that sensitivity-based sampling remains unbiased at the user level. Once sensitivities are computed, they are normalized into a user-specific sampling distribution. For each triple $t \in T$, the probability of sampling it for user $u$ is given by $p(t) = \frac{s(t)}{S}$.
This normalization ensures that the summary construction process is directly aligned with the query behaviour of each individual user rather than with the global query log.

We emphasize that sensitivity is not a universal measure of importance, but rather a quantity that varies depending on the user. Consequently, the same triple may be assigned different sensitivity values across users, depending on their respective query workloads. This user-conditioned notion of sensitivity constitutes the core mechanism through which personalization is realized in COREKG.

\subsection{Coreset Construction}
Given a user $u$ and the corresponding personalized query workload $Q$, we construct a coreset via sensitivity-based sampling, as detailed in Algorithm~\ref{alg:coreset_m} (Lines~11–15). Each sampled triple is assigned a weight to correct for sampling bias and ensure unbiased cost estimation. Algorithm~\ref{alg:coreset_m} is executed independently for each user using their respective workload $Q$, resulting in a separate, user-specific coreset. 

To correct for the sampling bias introduced by preferential selection, each sampled triple is assigned a weight:
$w(t) = \frac{1}{m \cdot p(t)} = \frac{S}{m \cdot s(t)}$
where $m$ is the sample size. This weighting guarantees that the cost computed from the coreset is unbiased, i.e., the expected cost equals the true cost of the full data. The weights also ensure bounded variance,    

COREKG conditions the sampling process on a user-specific query workload. For each user, the integration of filtered workload $Q$ enables COREKG to generate truly personalized summaries. This construction is repeated independently for each user, producing user-specific summaries with independent approximation guarantees.

\begin{theorem}[Coreset Size]{\label{theorem:main}}
Let $C$ and $w$ be the subset and weight function returned from the algorithm \ref{alg:coreset_m} over query workload \( Q \), and let \( X = \mathrm{cost}(G, Q) \) be the true cost on the full graph. Then, with probability at least \( 1 - \delta \) we have, $\left|\mathrm{cost}(C,Q) -\mathrm{cost}(G,Q) \right| \leq \epsilon \cdot \mathrm{cost}(G,Q)$
provided the number of samples \( m \) satisfies:
\[
m \geq \frac{8|Q|}{\epsilon^2} \cdot \log\left(\frac{1}{\delta}\right).
\]
\end{theorem}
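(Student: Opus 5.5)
The plan is to write $\mathrm{cost}(C,Q)$ as an average of $m$ i.i.d.\ bounded random variables with mean $\mathrm{cost}(G,Q)$, and then apply a Bernstein-type concentration inequality. The sensitivity bound $S=|Q|$ from Algorithm~\ref{alg:coreset_m} supplies the range bound that makes the sample size linear in $|Q|$ rather than quadratic.

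\textbf{Setup.} Identify $f_q(t)=\mathbb{I}[t\in T_q]$ and put $c(t)=\sum_{q\in Q} f_q(t)$. Then
\[
X=\mathrm{cost}(G,Q)=\sum_{t\in T} c(t)=\sum_{q\in Q}|T_q| ,
\]
and summing $s(t)$ over $t$ confirms $S=\sum_q |T_q|/|T_q|=|Q|$. Let $t_1,\dots,t_m$ be the i.i.d.\ draws from $p$, counting repeated draws separately. Define
\[
Y_i = \frac{S\, c(t_i)}{s(t_i)}, \qquad \text{so that} \qquad \mathrm{cost}(C,Q)=\sum_{i=1}^m w(t_i)c(t_i)=\frac{1}{m}\sum_{i=1}^m Y_i .
\]

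\textbf{Step 1: unbiasedness.} We have $\mathbb{E}[Y_i]=\sum_t \frac{s(t)}{S}\cdot\frac{S c(t)}{s(t)}=\sum_t c(t)=X$. Triples with $s(t)=0$ have $c(t)=0$ and are never sampled, so they contribute nothing.

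\textbf{Step 2: bounded range of the normalized variables.} Set $Z_i=Y_i/X$, which has mean $1$. Writing $Q_t=\{q: t\in T_q\}$, we get
\[
\frac{c(t)}{s(t)}=\frac{|Q_t|}{\sum_{q\in Q_t}1/|T_q|}\le \max_{q}|T_q| ,
\]
because this ratio is a harmonic mean of the $|T_q|$ over $q\in Q_t$. Hence
\[
0\le Z_i\le \frac{|Q|\max_q |T_q|}{\sum_q|T_q|}\le |Q| .
\]
This is the step I expect to be the crux. A plain Hoeffding bound on $[0,|Q|]$ would only give $m=O(|Q|^2\epsilon^{-2}\log(1/\delta))$. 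To get the linear dependence, I will instead use the variance bound
\[
\mathrm{Var}(Z_i)\le \mathbb{E}[Z_i^2]\le |Q|\,\mathbb{E}[Z_i]=|Q| .
\]

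\textbf{Step 3: Bernstein.} For i.i.d.\ $Z_i$ with range at most $|Q|$ and variance at most $|Q|$, Bernstein's inequality gives
\[
\Pr\Big[\Big|\tfrac1m\textstyle\sum_i Z_i-1\Big|>\epsilon\Big]\le 2\exp\!\Big(-\frac{m\epsilon^2}{2|Q|(1+\epsilon/3)}\Big)\le 2\exp\!\Big(-\frac{3m\epsilon^2}{8|Q|}\Big) ,
\]
where the last inequality uses $\epsilon\le 1$. Plugging in $m\ge 8|Q|\epsilon^{-2}\log(1/\delta)$ bounds the right-hand side by $2\delta^3\le\delta$ for $\delta\le 1/2$. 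Multiplying the event back through by $X$ yields $|\mathrm{cost}(C,Q)-X|\le\epsilon X$.

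\textbf{Assumptions and bookkeeping.} I will state the assumptions $\epsilon\le 1$ and $\delta\le 1/2$ explicitly. If needed, the constant $8$ leaves slack to absorb the factor $2$ in the two-sided tail.
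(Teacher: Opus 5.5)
Your proof is correct and follows essentially the same route as the paper: you apply Bernstein's inequality to the $m$ i.i.d.\ per-sample contributions, and you take both the range bound and the variance bound $|Q|\,\mathrm{cost}(G,Q)^2/m$ from $c(t)/s(t)\le\max_q|T_q|\le\mathrm{cost}(G,Q)$ together with $S=|Q|$. The one real difference is that you keep the factor $2$ of the two-sided tail, which the paper's proof silently drops; that is why you need an explicit restriction such as $\delta\le 1/2$, and some restriction of this kind is unavoidable, because for $\delta$ close to $1$ the stated bound allows $m=1$.
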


The above theorem can be proved by applying Bernstein's inequality and sensitivity analysis. 

%% file: sections/Experimants.tex
\section{Evaluation}
%
We do an extensive empirical evaluation of real-world datasets to assess the quality of our summarization method. The evaluation is performed on real-world KGs with SPARQL queries. For evaluation, we divide queries into train and test with a ratio of 80:20. Our focus is to test the generality and scalability of our proposed approach. Query coverage is used as the evaluation metric. The implementation of this methodology is available at {\url{https://github.com/SohelKResearch/COREKG}.

\subsection{Experimental Design and Personalization}

All experiments are conducted in a personalized knowledge graph summarization setting. Each user is modeled by a profile defined as a set of seed entities. Each seed entity is extracted from the train query workload, which represents user interests. For each dataset, we construct 15 user profiles. Each user profile consists of 5 seed entities sampled uniformly at random from non-variable entities appearing in training queries. These seeds define a personalized workload by retaining only queries that mention at least one seed entity.
To ensure fair comparison, all baseline methods are constrained to produce summaries with the same size budget, measured as the number of retained triples. Our coreset sampling size is determined by theoretical bounds from coreset theory, guaranteeing approximation quality independent of the original KG size. This design ensures that observed performance differences stem from summarization quality rather than summary size. Reported results are averaged across users. COREKG performs user-specific preprocessing, including query filtering and workload construction, prior to sensitivity computation and coreset generation, ensuring fully personalized summaries. To ensure fair comparison, all models are evaluated exclusively on the same test queries associated with each user profile.

While COREKG is evaluated under static workload settings, the sensitivity formulation is additive over queries. This allows incremental updates as new queries arrive without recomputing from scratch. This shows that COREKG naturally supports streaming coresets. We focus on static workloads for fair comparison with prior methods such as GLIMPSE and iSummary.

\subsection{Datasets}

We evaluate COREKG on three widely used real-world knowledge graphs: Freebase, DBpedia, and Wikidata. These datasets differ substantially in scale, schema richness, and domain coverage. These datasets allow us to assess the robustness of personalized knowledge graph summarization under diverse and realistic query workloads. 

\noindent
\textbf{Freebase~\cite{bollacker2008freebase}\footnote{\url{https://developers.google.com/freebase}}:}
It is a large-scale knowledge base developed by Metaweb, containing a large number of RDF triples describing entities and their relationships across diverse domains. The KG Size is 1.9 billion with a query size of 4,737. For query workloads, we use SPARQL queries from the WebQSP dataset~\cite{yih2016value}, which consists of natural language questions paired with corresponding SPARQL queries over Freebase. In our experiments, we retain only the SPARQL queries. These queries exhibit non-trivial structures, including multiple triple patterns and joins, making them suitable for evaluating query-driven summarization methods.

\noindent
\textbf{DBpedia~\cite{mendes2012dbpedia}\footnote{\url{https://databus.dbpedia.org/dbpedia/collections/latest-core}}:}
It is a large-scale knowledge graph extracted from structured content in Wikipedia. We use the latest DBpedia core dataset, which contains a large number of RDF triples spanning a wide range of entity types and relations. The KG Size is 0.53 billion with a query size of 82,519. We use SPARQL query logs from the Linked SPARQL Queries (LSQ) dataset~\cite{stadler2024lsq}\footnote{\url{http://lsq.aksw.org/}}. The raw LSQ logs are processed using the \texttt{lsq-clean} toolkit\footnote{\url{https://github.com/sparqeology/lsq-clean?tab=readme-ov-file}} to remove malformed, duplicated, and template-generated queries. The resulting workload contains diverse query shapes, including star-shaped and multi-join patterns, reflecting realistic usage of DBpedia in practice.

\noindent
\textbf{Wikidata\footnote{\url{https://dumps.wikimedia.org/wikidatawiki/entities/}}:}
It is a large, collaboratively maintained knowledge graph curated by the Wikimedia community. The KG Size is 25.33 billion with a query size of 30,154. We use the latest full RDF dump, which contains a large number of triples describing entities, properties, and statements across a wide range of domains. For Wikidata query workloads, we use SPARQL queries from the LC-QuAD dataset 2.0, obtained from the publicly available release\footnote{\url{https://huggingface.co/datasets/mohnish/lc_quad/blob/main/data.zip}}. LC-QuAD provides complex question–query pairs designed to test reasoning over Wikidata. 


\subsection{Evaluation Metrics}

We assess summarization methods using two key metrics: structural coverage reflecting how well the summary preserves query patterns, and the F1 score evaluating answer‐retrieval accuracy.

\noindent
\textbf{Coverage:} Given a personalized summary $S$ for a user $s$, a query workload $Q$, and two weights $w_n$ and $w_p$ for nodes and edges, respectively, we define the coverage as:
{
\begin{align*}
\text{Coverage}(Q, S, s) = \frac{1}{n} \sum_{s \in q_i} \bigg(w_n \frac{\text{snodes}(S, q_i)}{\text{nodes}(q_i)} + \\ w_p \frac{\text{sedges}(S, q_i)}{\text{edges}(q_i)} \bigg)
\end{align*}
}

Here, $\text{nodes}(q_i)$ and $\text{edges}(q_i)$ represent the total number of nodes and edges in query $q_i$, while $\text{snodes}(S, q_i)$ and $\text{sedges}(S, q_i)$ denote the number of those nodes and edges that are also included in the summary $S$. In our setup, we assign equal importance to nodes and edges by setting $w_n = 0.5$ and $w_p = 0.5$.

\noindent
\textbf{F1 Score:}
The F1 score measures the query-answering accuracy of a summary $S$ by comparing its retrieved answers to the ground-truth answers obtained from the full knowledge graph. For each query, we compute true positives (TP) as answers returned by both $S$ and the full KG, false positives (FP) as answers returned by $S$ but not found in the full KG, and false negatives (FN) as answers present in the full KG but missing from $S$. From these, we calculate precision $P = \tfrac{\mathrm{TP}}{\mathrm{TP} + \mathrm{FP}}$ and recall $R = \tfrac{\mathrm{TP}}{\mathrm{TP} + \mathrm{FN}}$, and use them to derive the F1 score as $\mathrm{F1} = 2\,\frac{P\,R}{P + R}$.

Coverage and F1 Score are computed independently for each user-specific workload and then averaged across users.

\subsection{Baseline Comparison}

We compare COREKG with several baseline methods that vary in personalization strategies and their ability to preserve query-relevant information. Table~\ref{tab:combined_scores} reports F1 scores reflecting query-answering accuracy and workload coverage. Graph-proximity-based methods such as PPR and GLIMPSE rely on random walks or submodular selection over query logs; although they achieve moderate coverage in some cases, their low F1 scores indicate loss of essential relational structure, particularly for complex SPARQL queries under small summary budgets or sparse histories. iSummary improves scalability by learning from aggregate workloads but limits personalization, resulting in consistently low coverage and F1 scores. PEGASUS and APEX$^2$ improve coverage using heuristic node merging and adaptive heat diffusion. However, PEGASUS lacks formal guarantees and may lose fine-grained semantics, leading to inconsistent F1 scores on complex datasets such as Wikidata. APEX$^2$ assumes dense query activity and alignment between semantic relevance and graph proximity, which does not always hold in practice. In contrast, COREKG consistently achieves the highest F1 scores while maintaining strong coverage. It avoids reliance on structural heuristics and instead provides explicit approximation guarantees through sensitivity-based coreset sampling that selects triples according to their contribution to user-specific query workloads. As a result, COREKG delivers robust and controllable performance even under sparse, skewed, or evolving workloads.








 \begin{table}[ht]
\centering
\scriptsize
\setlength{\tabcolsep}{2.5pt}
\renewcommand{\arraystretch}{1.05}

\begin{tabular}{l|ccc|ccc}
\hline
& \multicolumn{3}{c|}{\textbf{F1(\%)}} 
& \multicolumn{3}{c}{\textbf{Coverage(\%)}} \\
\cline{2-7}

\textbf{Method} 
& Freebase & DBpedia & WikiData 
& Freebase & DBpedia & WikiData \\
\hline

COREKG 
& \cellcolor{best}\textbf{56.35}  & \cellcolor{best}\textbf{58.64} & \cellcolor{best}\textbf{52.10}
& \cellcolor{best}\textbf{66.92}   & \cellcolor{best}\textbf{68.07} & \cellcolor{best}\textbf{67.46}\\

APEX$^2$ 
& 49.76 & 47.52 & 45.29 
& 56.21 & 42.06 & 65.18 \\

iSummary 
& 21.83 & 19.38 & 30.64 
& 23.16 & 12.55 & 39.72 \\

PEGASUS  
& 3.18 & 2.46 & 2.90 
& 63.03 & 66.32 & 33.84 \\

GLIMPSE  
& 16.21 & 32.68 & 28.94 
& 31.39 & 34.62 & 11.93 \\

PPR      
& 8.64 & 11.39 & 9.75 
& 10.93 & 5.04 & 4.61 \\

\hline
\end{tabular}

\caption{\scriptsize{F1 and Coverage scores.}}
\label{tab:combined_scores}

\end{table}


To further analyse the behaviour of different methods under varying summary budgets, we report performance across multiple budget settings. COREKG consistently outperforms all baselines across the entire budget range. At a budget of 1000 triples (DBpedia), COREKG achieves 30.76\% coverage, and 30.12\% F1 and APEX$^2$ achieves 11.72\% coverage and 18.26\% F1. PEGASUS achieves 32.80\% coverage but only 1.72\% F1, indicating poor query relevance. At a budget of 30000 triples (DBpedia), COREKG achieves 66.72\% coverage and 57.83\% F1, while APEX$^2$ reaches 44.48\% coverage and 47.12\% F1. PEGASUS achieves 65.32\% coverage and 2.68\% F1 and shows negligible improvement beyond this point, indicating early convergence. In the table \ref{tab:combined_scores} we use a budget of 60000. Finally, at a budget of 100000 triples (DBpedia), COREKG achieves 74.8\% coverage and 60.72\% F1, while APEX$^2$ reaches 50.83\% coverage and 49.82\% F1, and PEGASUS has already converged earlier and does not improve further. Methods such as iSummary and GLIMPSE also show minimal gains beyond earlier budgets. 

Overall, COREKG consistently outperforms all baselines across budgets, demonstrating its ability to preserve query-relevant structure.
\subsection{Time \& Space Complexity Analysis}
The COREKG framework includes preprocessing stages such as query normalization, seed node extraction, and query filtering. Let $L$ denote the average characters in a query and $Q$ denote the number of queries. Query normalization and seed node extraction require scanning each query using regular expressions, resulting in $O(QL)$ time. The query filtering operates in $O(QL)$ time through URI token extraction and set-based intersection with the seed set, and the corresponding space complexity is $O(QL)$. SPARQL query execution is independent of COREKG and is externally handled using Apache Jena Fuseki. Thus, its time and space complexity depend on the underlying SPARQL engine, indexing strategy, and query structure. Sensitivity computation requires $O(TQ)$ with $O(T)$ space, where $T$ is the total number of triples. Probability normalization, sampling, and weight assignment require $O(T)$ time and $O(m)$ space, where $m$ is the coreset sample size.

\section{Ablation Study}
\label{app:ablation}
To evaluate the role of each component of COREKG, we perform an Ablation study in which we investigate the effect of each component on the overall performance while maintaining the same size summary.

\noindent
\textbf{First}, COREKG-Uniform, which replaces sensitivity-based sampling with uniform sampling, collapses to near-zero performance, achieving almost $0\%$ F1 across all datasets and negligible coverage. This confirms that uniform random sampling is ineffective for identifying relevant triples. This indicates the need to use query-based sensitivity to preserve meaningful query structure.

\noindent
\textbf{Second}, COREKG-Global, which removes user-specific conditioning and constructs summaries using a global workload, performs significantly worse than the full model. It achieves F1 scores of $39.78\%$ (Freebase), $44.06\%$ (DBpedia), and $37.63\%$ (Wikidata), with corresponding coverage scores of $47.34\%$ (Freebase), $51.24\%$ (DBpedia), and $49.17\%$ (Wikidata). COREKG-Global performed worse than the full COREKG model, demonstrating that to provide personalized query handling, we must include user-specific query behaviours as a fundamental component.

\noindent
\textbf{Finally}, COREKG-Unweighted treated each sample from the coreset equally and assigned each sample an equal weight. It shows a substantial degradation in performance.  COREKG-Unweighted achieving F1 scores of $10.82\%$ (Freebase), $11.53\%$ (DBpedia), and $19.38\%$ (Wikidata), and coverage scores of $14.06\%$ (Freebase), $16.68\%$ (DBpedia), and $14.72\%$ (Wikidata). This means that it could not adequately handle sampling and thus produce an overall very low coverage. It has been demonstrated through the ablation study that Weighted Coreset Sampling is effective in reducing sampling bias and preserving query semantics across workloads.

Based on our ablation analysis, we conclude that all elements of COREKG are important and complementary to its success.

\section{Limitations}

While COREKG demonstrates strong performance across datasets, several limitations remain. First, the effectiveness of sensitivity-based sampling depends on the representativeness of the user-specific query workload. When workloads are sparse, highly skewed, or dominated by a narrow set of seed entities, the resulting summaries may overfit to observed queries and fail to capture emerging or under-represented structural patterns via sensitivity analysis. 

The theoretical analysis in this work guarantees preservation of the aggregate workload cost across the query set. It does not provide guarantees for per-query answers. These properties depend on correlations among triples and are outside the scope of the current coreset formulation. Extending the framework to per-query or structure-aware guarantees is an important direction for future work.

\section{Conclusion}
COREKG generates high-quality personalized knowledge graph summaries using sensitivity-based sampling over user-specific query workloads. The summary preserves query-relevant structure while achieving high coverage and F1 scores. Overall, COREKG is a strong approach for personalized KG summarization.
\section*{Ethics and Genereative AI Statement}
This work uses publicly available knowledge graphs and SPARQL query workloads for research purposes. The proposed method performs personalized knowledge graph summarization and does not generate autonomous decisions. No generative AI tools were used in the development of the methodology, experiments, or analysis. Generative AI tools were used only for language refinement.
\section*{Acknowledgments}
The authors would like to acknowledge the partial support of the Infosys Center for Artificial Intelligence (CAI), IIIT-Delhi in this work. Supratim acknowledges the kind support from Anusandhan National Research Foundation (ECRG/2024/000959). The work was partly supported by their generous fund.

%% file: sections/Appendix.tex
\onecolumn
\section{Theoretical Analysis}\label{appendix:theory}

In the theoretical analysis, personalization is captured by fixing a single user and defining all variables with respect to that user’s query workload $Q$. The sensitivity scores $s(t)$, sampling probabilities $p(t)$, total sensitivity $S$, and the coreset cost functions are all computed solely with respect to this user-specific workload, rather than a global query set. As a result, the approximation guarantees, variance bounds, and coreset size results are derived on a per-user basis and hold independently for each personalized summary. This ensures that the user-specific construction of COREKG is fully aligned with the theoretical analysis.

Let the graph $G = (V, E, T)$ be a knowledge graph, where $V$ is the set of nodes, $E$ is the set of edges, and $T \subseteq V \times E \times V$ is the set of triples. Let $Q$ denote a user-specific query workload. For each query $q \in Q$, let $f_q(t) \in \{0, 1\}$ indicate whether a triple $t \in T$ is relevant to query $q$ by calculating the sensitivity score\label{definition:Sensitivity}. Using this, we define the total cost of the graph $G$ concerning the query set $Q$ as:
\begin{equation}
    \text{cost}(G,Q) = \sum_{q \in Q} \text{cost}(G,q) = \sum_{t \in T} \sum_{q \in Q} f_q(t)
\end{equation}
 
This cost function measures the total number of associations between triples in $T$ and queries in $Q$, which effectively counts the number of triples that are relevant to at least one query. A uniform weight \( w(t) = 1 \) is assigned to every triple in the full graph \( G \).

To reduce computational or storage overhead, our goal is to approximate $\text{cost}(G, Q)$ using a compact, weighted subset of the original graph. We define this subset as a \emph{coreset}, denoted $C = (V', E', T')$, where $V' \subseteq V$, $E' \subseteq E$, and $T' \subseteq T$ is a weighted subset of triples. Each triple $t \in T'$ is assigned a non-negative weight $w(t)$, representing its importance. The total cost of the coreset $C$ with respect to the query set $Q$ is defined as:
\begin{equation}
    \text{cost}(C,Q) = \sum_{t \in T'} \sum_{q \in Q} w(t) \cdot f_q(t)
\end{equation}

\begin{definition}[Coreset]
Let $G = (V, E, T)$ be a knowledge graph and let $\varepsilon \in (0, 1)$ be an approximation parameter. A weighted subset $C = (V', E', T')$ where $V' \subseteq V$ and $E' \subseteq E$ and $T' \subseteq T$, is called an \emph{$\varepsilon$-coreset} of $G$ with respect to a query workload $Q$ if, with probability at least \( 1 - \delta \), the following inequality holds:
\begin{equation}
    \left| \text{cost}(G, Q) - \text{cost}(C, Q) \right| \leq \varepsilon \cdot \text{cost}(G, Q)
\end{equation}
In other words, the coreset $C$ provides a $(1 \pm \varepsilon)$ multiplicative approximation to the total cost of the full graph $G$, ensuring that query-relevant information is preserved up to a small relative error.
\end{definition}

Instead of assigning a new function to model this, we interpret the existing cost formulation as inherently reflecting the weighted influence of each triple through its frequency and relevance across the query set. This perspective allows us to approximate $\text{cost}(G, Q)$ using a compact, weighted subset of triples selected via sensitivity-based sampling. The objective is to construct a coreset $C$ such that. 
\[
\text{cost}(C, Q) \approx \text{cost}(G, Q),
\]
Thereby enabling accurate and efficient summarization while preserving user-specific query semantics.

Now, we formally define the sensitivity score for every triple $t \in T$ that quantifies its importance with respect to the query workload $Q$.

\begin{lemma}[Bound on Total Sensitivity]\label{lemma:TotalSensitivity}
Let $G = (V, E, T)$ be a knowledge graph and $Q$ a query workload. 
For each triple $t \in T$, we define its \emph{sensitivity score} with respect to $Q$ as
\[
s(t) = \sum_{q \in Q} \frac{f_q(t)}{\mathrm{cost}(G,q)} = \sum_{q \in Q} \frac{1}{|T_q|} \cdot \mathbb{I}[t \in T_q],
\]
where $f_q(t) \in \{0,1\}$ is an indicator of whether triple $t$ is relevant to query $q$. Then, the total sensitivity across all triples is exactly the number of queries:
\[
S = \sum_{t \in T} s(t) = |Q|.
\]
\end{lemma}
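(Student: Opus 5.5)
The plan is to prove the identity by exchanging the order of summation and then using the fact that, for each fixed query, the indicators sum to the size of that query's relevant set. Before doing so I would make explicit the identification used in the definition of $s(t)$: $f_q(t)=\mathbb{I}[t\in T_q]$, so that
\[
\mathrm{cost}(G,q)=\sum_{t\in T} f_q(t)=|T_q| .
\]
This shows the two expressions for $s(t)$ in the statement coincide.

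The main computation is then
\[
S=\sum_{t\in T}\sum_{q\in Q}\frac{\mathbb{I}[t\in T_q]}{|T_q|}
=\sum_{q\in Q}\frac{1}{|T_q|}\sum_{t\in T}\mathbb{I}[t\in T_q]
=\sum_{q\in Q}\frac{|T_q|}{|T_q|}=|Q| .
\]
Both sums are finite, since $T$ is a finite set of triples and $Q$ a finite workload, so the interchange in the second step is immediate. The third step uses $T_q\subseteq T$, which holds because $T_q$ consists of triples of $G$ matched by the patterns of $q$; hence $\sum_{t\in T}\mathbb{I}[t\in T_q]=|T_q|$.

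The only real obstacle is the degenerate case $T_q=\emptyset$, where $1/|T_q|$ is undefined. I would handle it by adopting the convention, implicit in Algorithm~\ref{alg:coreset_m}, that $Q$ contains only queries with nonempty support, i.e.\ $|T_q|\ge 1$. Queries with empty answer sets contribute nothing to $\mathrm{cost}(G,Q)$, so they can be discarded without changing the cost. If they are kept and assigned zero contribution instead, the same computation gives $S=|\{q\in Q: T_q\neq\emptyset\}|\le |Q|$. That is still the bound needed later for the sampling probabilities $p(t)=s(t)/S$ and for the factor $|Q|$ appearing in Theorem~\ref{theorem:main}.
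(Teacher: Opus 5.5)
Your proof is correct and follows essentially the same argument as the paper: identify $\mathrm{cost}(G,q)=|T_q|$, swap the two finite sums, and note that each query contributes $|T_q|/|T_q|=1$. Your explicit handling of $T_q\subseteq T$ and of the degenerate case $T_q=\emptyset$ is a sensible refinement that the paper's proof leaves implicit.
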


\begin{proof}

Let,
\[
s(t) = \sum_{q\in Q}\frac{f_q(t)}{\mathrm{cost}(G,q)} = \sum_{q\in Q}\frac{\mathbb{I}[t\in T_q]}{\mathrm{cost}(G,q)}.
\]

By expanding $\mathrm{cost}(G,q)$:
\[
=\sum_{q\in Q}\frac{\mathbb{I}[t\in T_q]}{\displaystyle\sum_{t'\in T}\mathbb{I}[t'\in T_q]}
=\sum_{q\in Q}\frac{\mathbb{I}[t\in T_q]}{|T_q|}
\]
Therefore, 
\[
S = \sum_{t \in T} s(t) = \sum_{t \in T} \sum_{q \in Q} \frac{1}{|T_q|} \cdot \mathbb{I}[t \in T_q].
\]
We can exchange the order of summation:
\[
= \sum_{q \in Q} \sum_{t \in T} \frac{1}{|T_q|} \cdot \mathbb{I}[t \in T_q].
\]
For each query $q$, the inner sum over $t \in T$ counts exactly $|T_q|$ terms, each contributing $\frac{1}{|T_q|}$:
\[
\sum_{t \in T} \frac{1}{|T_q|} \cdot \mathbb{I}[t \in T_q] = \frac{|T_q|}{|T_q|} = 1.
\]
Thus, summing over all $q \in Q$ gives:
\[
\sum_{t \in T} s(t) = \sum_{q \in Q} 1 = |Q|.
\]
\end{proof}

\begin{lemma}[Expectation and Variance of Coreset Cost]\label{lemma:UnbiasVariance}
Let $G = (V, E, T)$ be a knowledge graph and $Q$ a query workload. 
For each triple $t \in T$, let $p(t)$ be its sampling probability, 
$X_t$ the number of times $t$ is sampled in $m$ independent rounds, 
and $w(t) = \tfrac{1}{m \cdot p(t)}$ its assigned weight. 
Define the cost contribution of $t$ as 
\[
|T_q| = \sum_{t \in T} f_q(t),
\]
where $f_q(t) \in \{0,1\}$ denotes whether $t$ is relevant to query $q$. 
Then, the following hold:

1. The expected cost of the coreset equals the cost of the full graph:
\[
\mathbb{E}[\mathrm{cost}(C,Q)] = \mathrm{cost}(G,Q).
\]

2. The variance of the coreset cost is bounded as:
\[
\operatorname{Var}[\mathrm{cost}(C,Q)] \leq \frac{|Q|}{m} \cdot \mathrm{cost}(G,Q)^2,
\]
\end{lemma}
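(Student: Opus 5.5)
The plan is to write $\mathrm{cost}(C,Q)$ as a sum of $m$ i.i.d.\ random variables, one per sampling round of Algorithm~\ref{alg:coreset_m}. Both claims then follow from per-round calculations using $p(t)=s(t)/|Q|$ from Lemma~\ref{lemma:TotalSensitivity}.

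For each triple set $F(t)=\sum_{q\in Q} f_q(t)$, so that $\mathrm{cost}(G,Q)=\sum_{t\in T}F(t)=\sum_{q\in Q}|T_q|$. Let $t_1,\dots,t_m$ be the i.i.d.\ draws from $p$, and put $Y_j = w(t_j)F(t_j) = F(t_j)/(m\,p(t_j))$. Since $X_t$ counts how often $t$ is drawn, we have
\[
\mathrm{cost}(C,Q)=\sum_{t\in T} X_t\, w(t)F(t)=\sum_{j=1}^m Y_j .
\]
Triples with $s(t)=0$ are never sampled. They also satisfy $F(t)=0$, because $s(t)=0$ forces $f_q(t)=0$ for every $q$. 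So they contribute nothing on either side, and all divisions by $p(t)$ below are over the support of $p$.

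\emph{Unbiasedness.} For each round,
\[
\mathbb{E}[Y_j]=\sum_{t:\,p(t)>0} p(t)\,\frac{F(t)}{m\,p(t)}=\frac{1}{m}\,\mathrm{cost}(G,Q).
\]
Linearity of expectation over the $m$ rounds then gives $\mathbb{E}[\mathrm{cost}(C,Q)]=\mathrm{cost}(G,Q)$. Equivalently, $\mathbb{E}[X_t]=m\,p(t)$ cancels the weight $w(t)$ exactly.

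\emph{Variance.} By independence of the rounds,
\[
\operatorname{Var}[\mathrm{cost}(C,Q)] = m\operatorname{Var}[Y_1]\le m\,\mathbb{E}[Y_1^2]=\frac{1}{m}\sum_{t}\frac{F(t)^2}{p(t)}=\frac{|Q|}{m}\sum_t \frac{F(t)^2}{s(t)},
\]
where the last step uses $p(t)=s(t)/|Q|$ from Lemma~\ref{lemma:TotalSensitivity}.

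The key step, and the only non-routine one, is a pointwise lower bound on the sensitivity in terms of $F(t)$. Each $|T_q|$ is one summand of $\mathrm{cost}(G,Q)=\sum_{q}|T_q|$, so $|T_q|\le \mathrm{cost}(G,Q)$. Therefore
\[
s(t)=\sum_q \frac{f_q(t)}{|T_q|}\ \ge\ \frac{F(t)}{\mathrm{cost}(G,Q)},
\]
which gives $F(t)^2/s(t)\le F(t)\,\mathrm{cost}(G,Q)$. Summing over $t$ yields $\sum_t F(t)^2/s(t)\le \mathrm{cost}(G,Q)^2$. Hence $\operatorname{Var}[\mathrm{cost}(C,Q)]\le \frac{|Q|}{m}\,\mathrm{cost}(G,Q)^2$, as claimed.

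I expect the main obstacle to be only bookkeeping: keeping the identification of $\sum_t X_t w(t)F(t)$ with $\sum_j Y_j$ clean, and handling zero-sensitivity triples. The crude bound $|T_q|\le\mathrm{cost}(G,Q)$ is exactly what produces the factor $|Q|$, and it is this factor that feeds into the sample size in Theorem~\ref{theorem:main}.
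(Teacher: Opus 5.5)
Your proof is correct and follows the paper's argument essentially step for step. You use the same decomposition into $m$ i.i.d.\ per-round contributions, the same bound $\operatorname{Var}[\mathrm{cost}(C,Q)]\le \frac{1}{m}\sum_t F(t)^2/p(t)$ obtained by dropping the negative term, and the same pointwise bound $s(t)\ge F(t)/\mathrm{cost}(G,Q)$. The only cosmetic differences are these: the paper first proves $s(t)\ge F(t)/L_{\max}$ with $L_{\max}=\max_q|T_q|$ and then uses $L_{\max}\le \mathrm{cost}(G,Q)$, whereas you go directly through $|T_q|\le \mathrm{cost}(G,Q)$; and you explicitly handle zero-sensitivity triples, which the paper leaves implicit.
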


\begin{proof}
\textbf{Expectation.} 
The coreset is formed by sampling $m$ triples with replacement, where each triple $t \in T$ is chosen with probability $p(t)$. Thus 
$X_t \sim \text{Binomial}(m, p(t))$. The coreset cost is
\begin{eqnarray*}
    \mathrm{cost}(C,Q) = \sum_{t \in T} \sum_{q \in Q} X_t \cdot w(t) f_{q}(t) = \sum_{t \in T} X_t \cdot \frac{1}{m \cdot p(t)} \cdot \sum_{q \in Q} f_q(t).
\end{eqnarray*}

Taking expectation and applying linearity:
\[
\mathbb{E}[\mathrm{cost}(C,Q)] 
= \sum_{t \in T} \frac{1}{m \cdot p(t)} \cdot \sum_{q \in Q} f_q(t) \cdot \mathbb{E}[X_t].
\]
we have $\mathbb{E}[X_t] = m \cdot p(t)$. Substituting:
\[
\mathbb{E}[\mathrm{cost}(C,Q)] 
= \sum_{t \in T} \sum_{q \in Q} f_q(t) 
= \mathrm{cost}(G,Q).
\]

\textbf{Variance.}
%


 %
Lets consider $cost(G,Q)= \sum_{t \in T} c_t$ is the total query cost for a triple and $r_i = \sum_{t \in T} y_{ti} \cdot \frac{c_t}{m \cdot p(t)}$ be the per-round cost contribution, where $y_{ti} \in \{0,1\}$ indicates selection of triple $t$ in round $i$. The total coreset cost is:

\[
\mathrm{cost}(C,Q) = \sum_{i=1}^m r_i
\]

The expectation of each $r_i$ is:
\begin{align*}
\mathbb{E}[r_i]
=
\mathbb{E}\left[
\sum_{t\in T}
y_{ti}\cdot \frac{c_t}{m\,p(t)}
\right] =
\sum_{t\in T}
\mathbb{E}\left[
y_{ti}\cdot \frac{c_t}{m\,p(t)}
\right]=
\sum_{t\in T}
\frac{c_t}{m\,p(t)}
\cdot
\mathbb{E}[y_{ti}]  =
\sum_{t\in T}
\frac{c_t}{m\,p(t)}
\cdot p(t) =
\sum_{t\in T}
\frac{c_t}{m} =
\frac{\mathrm{cost}(G,Q)}{m}.
\end{align*}

For the variance decomposition:
\begin{align*}
\operatorname{Var}[r_i] = \mathbb{E}[r_i^2] - (\mathbb{E}[r_i])^2 = \frac{1}{m^2}\sum_{t \in T} \frac{c_t^2}{p(t)} - \frac{\text{cost}(G, Q)^2}{m^2}
\end{align*}

The total variance becomes:
\[
\operatorname{Var}[\mathrm{cost}(C,Q)] = m \cdot \operatorname{Var}[r_i] \leq \frac{1}{m}\sum_{t \in T} \frac{c_t^2}{p(t)}
\]

Substituting the sensitivity-based sampling probability $p(t) = \frac{s(t)}{S}$:
\[
\operatorname{Var}[\mathrm{cost}(C,Q)] \leq \frac{S}{m}\sum_{t \in T} \frac{c_t^2}{s(t)}
\]

Let $L_{\max} = \max_{q\in Q}|T_q|$ denote the maximum number of relevant triples associated with any query in the workload.

Since $|T_q| \leq L_{\max}$ for every query $q \in Q$, we have $\frac{1}{|T_q|}\geq\frac{1}{L_{\max}}$ Substituting this inequality into the definition of $s(t)$ gives
\begin{align*}
s(t)
=
\sum_{q\in Q}\frac{1}{|T_q|}f_q(t) \geq \sum_{q\in Q}\frac{1}{L_{\max}}f_q(t) =
\frac{1}{L_{\max}}
\sum_{q\in Q}f_q(t) =
\frac{c_t}{L_{\max}}.
\end{align*}

Rearranging,
\[
\frac{1}{s(t)}
\leq
\frac{L_{\max}}{c_t}.
\]
Multiplying both sides by $c_t^2$ yields
\[
\frac{c_t^2}{s(t)}
\leq
L_{\max}c_t.
\]

Summing over all triples $t \in T$,
\begin{align*}
\sum_{t\in T}\frac{c_t^2}{s(t)}
\leq
L_{\max}\sum_{t\in T}c_t \leq
(\mathrm{cost}(G,Q))^{2}.
\end{align*}

Since, $L_{\max} = \max_{q \in Q}|T_q| \leq \sum_{q \in Q}|T_q| = \mathrm{cost}(G,Q)$. Now substituting this bound into the variance expression,
\begin{align*}
\operatorname{Var}[\mathrm{cost}(C,Q)]
\leq
\frac{S}{m}\mathrm{cost}(G,Q)^2 
\end{align*}
\end{proof}

\begin{theorem}[Coreset Size]{\label{theorem:appendix}}
Let $C$ and $w$ be the subset and weight function returned from the algorithm \ref{alg:coreset_m} over query workload \( Q \), and let \( X = \mathrm{cost}(G, Q) \) be the true cost on the full graph. Then, with probability at least \( 1 - \delta \) we have, $\left|\mathrm{cost}(C,Q) -\mathrm{cost}(G,Q) \right| \leq \epsilon \cdot \mathrm{cost}(G,Q)$
provided the number of samples \( m \) satisfies:
\[
m \geq \frac{8|Q|}{\epsilon^2} \cdot \log\left(\frac{1}{\delta}\right).
\]
\end{theorem}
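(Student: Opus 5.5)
The plan is to view $\mathrm{cost}(C,Q)$ as a sum of $m$ i.i.d.\ bounded random variables and apply Bernstein's inequality. I will use the per-round decomposition from the proof of Lemma~\ref{lemma:UnbiasVariance}: $\mathrm{cost}(C,Q)=\sum_{i=1}^m r_i$, where $r_i = c_{t_i}/(m\,p(t_i))$, $t_i$ is the triple drawn in round $i$, and $c_t=\sum_{q\in Q} f_q(t)$. Write $X=\mathrm{cost}(G,Q)$. That lemma already gives $\mathbb{E}[r_i]=X/m$ and $\operatorname{Var}[\mathrm{cost}(C,Q)]\le \frac{|Q|}{m}X^2$. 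These supply the mean and the variance proxy $\sigma^2$ for Bernstein. The bound $S=|Q|$ is Lemma~\ref{lemma:TotalSensitivity}.

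The missing ingredient is an almost-sure bound on $|r_i-\mathbb{E}r_i|$. Using $p(t)=s(t)/S$ and the inequality $s(t)\ge c_t/L_{\max}$ proved inside Lemma~\ref{lemma:UnbiasVariance}, I get
\[
r_i=\frac{S\,c_{t_i}}{m\,s(t_i)}\le \frac{|Q|\,L_{\max}}{m}\le \frac{|Q|\,X}{m},
\]
since $L_{\max}\le X$. Both $r_i\ge 0$ and $\mathbb{E}r_i=X/m$ lie in $[0,|Q|X/m]$, so the centered variable satisfies $|r_i-\mathbb{E}r_i|\le b:=|Q|X/m$.

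Bernstein's inequality then gives
\[
\Pr\big[|\mathrm{cost}(C,Q)-X|>\epsilon X\big]\le 2\exp\!\left(-\frac{\epsilon^2X^2}{2\sigma^2+\tfrac{2}{3}b\,\epsilon X}\right).
\]
Substituting $\sigma^2\le |Q|X^2/m$ and the value of $b$, the exponent becomes
\[
\frac{\epsilon^2 m}{|Q|\,(2+2\epsilon/3)}\ge \frac{3\epsilon^2 m}{8|Q|}
\]
for $\epsilon\le 1$. The failure probability is therefore at most $2\exp(-3\epsilon^2 m/(8|Q|))$. This is at most $\delta$ whenever $m\ge \frac{8|Q|}{3\epsilon^2}\log(2/\delta)$, and that condition is implied by $m\ge \frac{8|Q|}{\epsilon^2}\log(1/\delta)$ as long as $\log(2/\delta)\le 3\log(1/\delta)$, i.e.\ $\delta\le 1/\sqrt2$.

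The main obstacle is reconciling constants. The factor $2$ from the two-sided tail means the stated $\log(1/\delta)$ form needs the mild restriction $\delta\le 1/\sqrt2$ (for larger $\delta$ I would instead state $\log(2/\delta)$). One must also check that the crude range bound $b\le |Q|X/m$, which uses $L_{\max}\le X$, is legitimate. I would double-check the variance bound in Lemma~\ref{lemma:UnbiasVariance}, where one factor of $X$ was also loosened via $L_{\max}\le X$, since the whole size bound rests on it.
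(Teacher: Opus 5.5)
Your proposal is correct and follows the paper's proof essentially step for step. It uses the same decomposition $\mathrm{cost}(C,Q)=\sum_{i=1}^m r_i$, the same range bound $b=|Q|\,\mathrm{cost}(G,Q)/m$, the same variance bound $|Q|\,\mathrm{cost}(G,Q)^2/m$ (which does hold, so that check goes through), and Bernstein's inequality with $2+\tfrac{2}{3}\epsilon$ relaxed to a constant.

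Your version is in fact more careful than the paper's in two places. First, the paper drops the factor $2$ of the two-sided Bernstein bound. Some restriction such as your $\delta\le 1/\sqrt{2}$, or the $\log(2/\delta)$ form, is genuinely needed: for $\delta$ near $1$ the stated condition allows $m=1$, which fails with probability one on simple two-query instances. Second, the paper asserts $|r_i-\mathbb{E}[r_i]|\le r_i$, which is false whenever $\mathbb{E}[r_i]>2r_i$; your observation that $r_i$ and $\mathbb{E}[r_i]$ both lie in $[0,b]$ is sound.
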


\begin{proof}
We express the total coreset cost as a sum of independent contributions:
\[
\mathrm{cost}(C,Q) = \sum_{i=1}^m r_i
\]
where each \( r_i = \frac{c_t}{m p(t)} \) represents the contribution of $i^{th}$ sample which samples the triple \( t \). 

We first bound the deviation of the random variables $|r_i - \mathbb{E}[r_i]|$. From the Lemma \ref{lemma:UnbiasVariance} we have,
\[
\mathbb{E}[r_i]
=
\frac{1}{m}\sum_{t\in T}c_t
=
\frac{\mathrm{cost}(G,Q)}{m}.
\]

Therefore,
\begin{align*}
\left|r_i-\mathbb{E}[r_i]\right|
=
\left|
\frac{c_t}{m\cdot p(t)}
-
\frac{\mathrm{cost}(G,Q)}{m}
\right| 
=
\frac{1}{m}
\left|
\frac{c_t}{p(t)}-\mathrm{cost}(G,Q)
\right| \leq
\frac{1}{m}\cdot \frac{c_t}{p(t)} = 
\frac{S}{m}\cdot \frac{c_t}{s(t)}
\end{align*}

Since $f_q(t)$ and $\mathrm{cost}(G,Q)$ are non-negatives, so we have the following, 
$$s(t) = \sum_{q \in Q} \frac{f_q(t)}{\mathrm{cost}(G,q)} \geq \frac{\sum_{q \in Q} f_q(t)}{\sum_{q \in Q} \mathrm{cost}(G,q)} = \frac{c_t}{\mathrm{cost}(G,Q)}.$$

So, we get, $| r_i - \mathbb{E}[r_i] | \leq b $ where $b = \frac{S}{m} \mathrm{cost}(G,Q)$. Furthermore, we know that, $\operatorname{Var}[\mathrm{cost}(C,Q)] \leq \frac{S}{m} \text{cost}(G, Q)^2$.

Now, applying Bernstein's inequality:
\begin{eqnarray*}
    \Pr\left( \left|\mathrm{cost}(C,Q) - \mathrm{cost}(G,Q)\right| > \epsilon \mathrm{cost}(G,Q) \right) &\leq& \exp\left( \frac{-\epsilon^2 \mathrm{cost}(G,Q)^2}{2 \cdot \operatorname{Var}[\mathrm{cost}(C,Q)] + \frac{2}{3} b \epsilon \mathrm{cost}(G,Q)} \right) \\
    &=& \exp\Bigg(\frac{-\epsilon^2 \, \mathrm{cost}(G,Q)^2}{\frac{2 S \, \mathrm{cost}(G,Q)^2}{m} + \frac{2 S \epsilon \, \mathrm{cost}(G,Q)^2}{3m}}\Bigg) \\
    &=& \exp\left( \frac{-\epsilon^2 m}{2S + \frac{2}{3} \epsilon S} \right)
\end{eqnarray*}



To ensure this probability is at most \( \delta \), we require:
\[
\exp\left( \frac{-\epsilon^2 m}{2S + \frac{2}{3} \epsilon S} \right) \leq \delta
\Rightarrow
m \geq \frac{2S + \frac{2}{3} \epsilon S}{\epsilon^2} \cdot \log\left( \frac{1}{\delta} \right)
\]

Finally, noting that \( 2 + \frac{2}{3} \epsilon \leq 8 \) for \( \epsilon \in (0,1) \), we conclude:
\[
m \geq \frac{8 S}{\epsilon^2} \cdot \log\left( \frac{1}{\delta} \right)
\]

\noindent Substituting \(S = |Q|\), we get:
\[
m \geq \frac{8 |Q|}{\epsilon^2} \cdot \log\left( \frac{1}{\delta} \right).
\]
\end{proof}

This follows from Bernstein's inequality since for any $\varepsilon \in (0,1)$ and $\delta \in (0,1)$ choosing $m = O\!\left(\tfrac{S}{\varepsilon^{2}}\log \tfrac{1}{\delta}\right)$ guarantees with probability at least $1-\delta$ that the weighted coreset is a $(1\pm\varepsilon)$ approximation to the workload and hence the summary size depends on $S$ (or equivalently $|Q|$) rather than $|T|$.